\def\Tabref#1{Table~\ref{#1}}
\def\Figref#1{Figure~\ref{#1}}
\def\Secref#1{Section~\ref{#1}}
\def\Thmref#1{Theorem~\ref{#1}}
\def\Lemref#1{Lemma~\ref{#1}}
\def\Defref#1{Definition~\ref{#1}}
\def\Proref#1{Proposition~\ref{#1}}
\def\eqref#1{equation~\ref{#1}}
\def\1{\bm{1}}
\DeclareMathAlphabet{\mathsfit}{\encodingdefault}{\sfdefault}{m}{sl}
\SetMathAlphabet{\mathsfit}{bold}{\encodingdefault}{\sfdefault}{bx}{n}
\def\gD{{\mathcal{D}}}
\def\gL{{\mathcal{L}}}
\def\gU{{\mathcal{U}}}
\def\gV{{\mathcal{V}}}
\def\gW{{\mathcal{W}}}
\def\gX{{\mathcal{X}}}
\def\gY{{\mathcal{Y}}}
\def\sP{{\mathbb{P}}}
\def\sQ{{\mathbb{Q}}}
\newcommand{\E}{\mathbb{E}}
\newcommand{\R}{\mathbb{R}}
\theoremstyle{definition}
\newtheorem{definition}{Definition}[section]
\newtheorem{proposition}{Proposition}[section]
\newtheorem{theorem}{Theorem}[section]
\newtheorem{lemma}[theorem]{Lemma}
\icmltitlerunning{Domain Adaptation with Factorizable Joint Shift}
\begin{document}

\twocolumn[
\icmltitle{Domain Adaptation with Factorizable Joint Shift}



\icmlsetsymbol{equal}{*}

\begin{icmlauthorlist}
\icmlauthor{Hao He}{mit}
\icmlauthor{Yuzhe Yang}{mit}
\icmlauthor{Hao Wang}{rutgers}
\end{icmlauthorlist}

\icmlaffiliation{mit}{Massachusetts Institute of Technology, CSAIL}
\icmlaffiliation{rutgers}{Department of Computer Science, Rutgers University}

\icmlcorrespondingauthor{Hao He}{haohe@mit.edu}

\icmlkeywords{Domain Adaptation}

\vskip 0.3in
]



\printAffiliationsAndNotice{}  

\begin{abstract}
Existing domain adaptation (DA) usually assumes the domain shift comes from either the covariates or the labels. However, in real-world applications, samples selected {from different} domains {could} have biases in both the covariates and the labels. In this paper, we propose a new assumption, Factorizable Joint Shift (FJS), to handle the co-existence of sampling bias in covariates and labels. Although allowing for the shift from both sides, FJS assumes the independence of {the bias between the two factors}. We provide theoretical and empirical understandings about when FJS degenerates to prior assumptions and when it is necessary. We further propose Joint Importance Aligning (JIA), a discriminative learning objective to obtain joint importance estimators for both supervised and unsupervised domain adaptation. 
Our method can be seamlessly incorporated with existing domain adaptation algorithms for better importance estimation and weighting on the training data.
Experiments on a synthetic dataset demonstrate the advantage of our method.
\end{abstract}

\vspace{-5mm}
\section{Introduction}

The core problem in domain adaptation originates from the \emph{shift} between the training distribution (source domain) $p$ and the test distribution (target domain) $q$. This equivalently indicates that the samples we obtain from one domain are biased comparing to that from the other domain. Depending on different beliefs about the cause of the sampling bias, there emerges two common domain adaptation assumptions: (1) \emph{Covariate shift (CS)} assumes the bias is purely on the covariate while $p(y|x)$ and $q(y|x)$ equals. (2) \emph{Label shift (LS)} assumes the bias is purely on the label while $p(x|y)$ and $q(x|y)$ equals. As observed by~\cite{scholkopf2012causal}, covariate shift corresponds to causal learning and label shift corresponds to anti-causal learning. 

Another widely adopted assumption is \emph{domain invariance (DI)}, which assumes the discrepancy in two domains can be eliminated by feature transformations. One typical example is transferring from MNIST to colorful-MNIST, where a transformation discarding the color information will force the feature distribution invariant to the domains. Such assumption motivates many domain adaptation methods based on adversarial training~\cite{DANN,CIDA,CDAN}. A recent work~\cite{GLS} proposed a new assumption called \emph{generalized label shift (GLS)}, which aims to generalize domain adversarial techniques to handle mismatched label distributions.

\begin{figure}[!t]
\begin{center}
\centerline{\includegraphics[width=0.99\columnwidth]{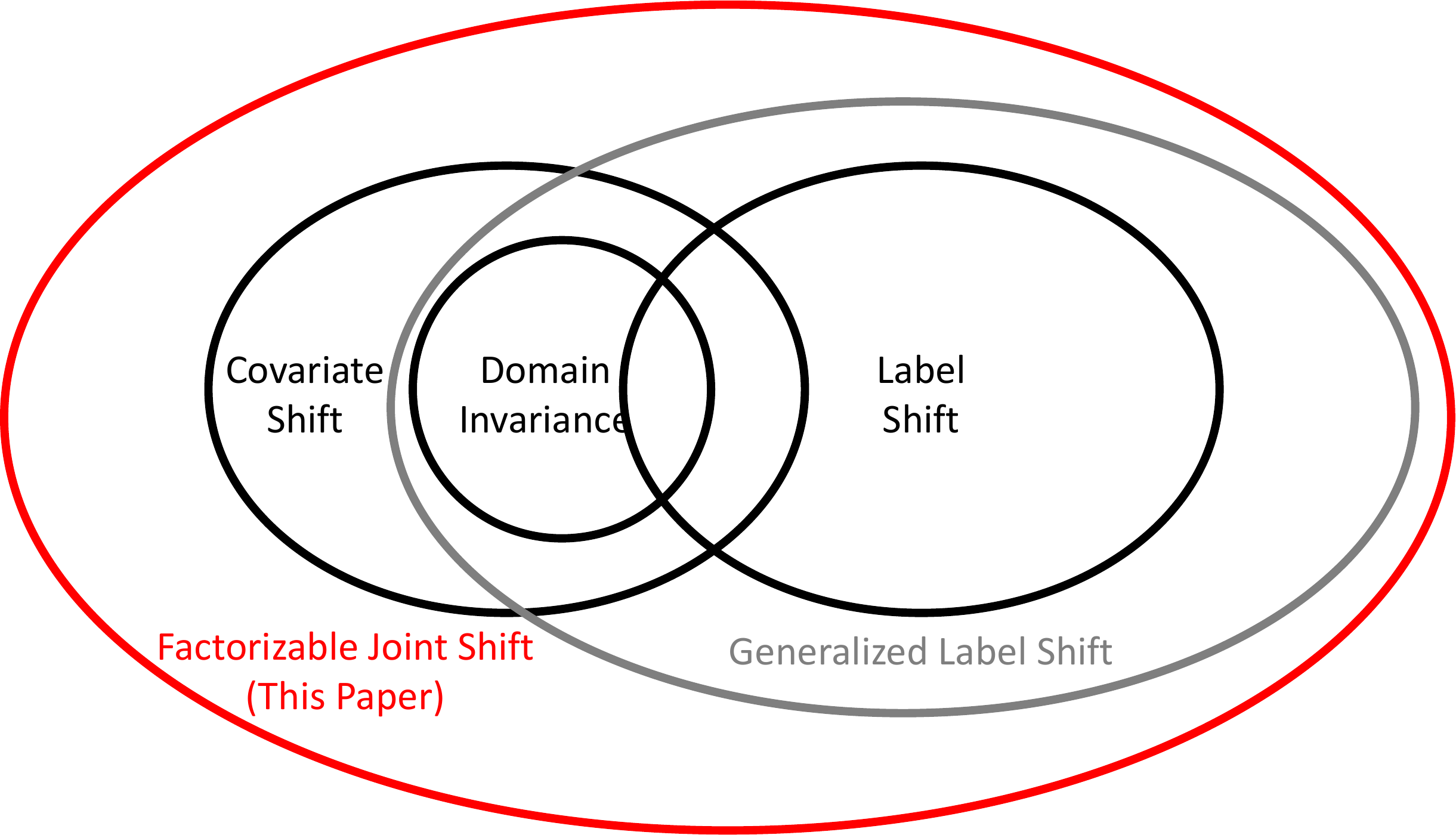}}
\caption{The Venn Diagram of domain adaptation assumptions.}
\label{fig:venn}
\end{center}
\vspace{-1.2cm}
\end{figure}

In this paper, we focus on a more relaxed setting, called \emph{Factorizable Joint Shift (FJS)}, where we do not assume any domain invariance and allow the bias to exist on both the data and the label. We only assume the biases coming from the data and the label are statistically independent. Through the lens of joint importance, we unify the traditional assumptions (see \Figref{fig:venn}) and identify that the uncertainty of the learning task affects the choice of assumptions.

We further propose a general strategy \emph{Joint Importance Aligning (JIA)} to handle FJS. Essentially, JIA is a discriminative learning paradigm that learns the two factors of joint importance corresponding to the data and the label. We also identify, under the setting of unsupervised domain adaptation, there is a lack of information to discover the factorization of joint importance. Motivated by the smoothness of the importance, we propose to add a regularization to facilitate the clustering structure of the estimated importance. We conduct experiments on a synthetic dataset where our method outperforms all the baselines.

\begin{table*}[!t]
\label{tab:factor}
\caption{Formal definition and importance factorization of traditional domain adaptation assumptions.}
\vspace{-2mm}
\begin{center}
\resizebox{1\textwidth}{!}{
\begin{tabular}{lcc}
\hline
Assumptions & Mathematical Definitions & Joint Importance Factorizations \\
\hline
\hline
Covarite Shift & 
$\gD_S(X)\neq\gD_T(X)$, $\gD_S(Y|X)=\gD_T(Y|X)$ & 
$\gU(x) = \frac{\gD_T(x)}{\gD_S(x)}$, $\gV(y) = 1$
\\[3pt]
Label Shift & 
$\gD_S(Y)\neq\gD_T(Y)$, $\gD_S(X|Y)=\gD_T(X|Y)$ & 
$\gU(x) = 1$, $\gV(y) = \frac{\gD_T(y)}{\gD_S(y)}$
\\[3pt]
Domain Invariance & 
\small{$\exists Z=g(X)$, s.t. $\gD(Y|g(X))=\gD(Y|X)$, $\gD_S(Z,Y)=\gD_T(Z,Y)$} & 
$\gU(x) = \frac{\gD_T(x)}{\gD_S(x)}$, $\gV(y) = 1$
\\[3pt]
\small{Generalized Label Shift} & \small{$\exists Z=g(X)$, s.t. $\gD(Y|g(X))=\gD(Y|X)$, $\gD_S(Z|Y)=\gD_T(Z|Y)$} & 
\small{$\gU(x)=\frac{\gD_T(X=x|Z=g(x))}{\gD_S(X=x|Z=g(x)) },\gV(y)=\frac{\gD_T(y)}{\gD_S(y)}$}
\\[3pt]
\hline
\end{tabular}
}
\end{center}
\vspace{-6mm}
\end{table*}

\section{Assumption: Factorizable Joint Shift}

\noindent {\bf Notation.}
We use $\gX$ and $\gY$ to denote the data space and the label space; $x$ and $y$ denote values in the two spaces; $X$ and $Y$ are random variables taking value in the two spaces. $\gD_S$ and $\gD_T$ denote the sample distribution in the source and the target domains. When the statement holds in both domains, we use $\gD$ to denote the distribution, ignoring the domain subscript. We use $\gD(V_1)$, $\gD(V_1|V_2)$, and $\gD(V_1,V_2)$ to denote the marginal, conditional, and joint distribution of certain variables in the domain, respectively. We also use $\gD(x)$ as an abbreviation of $\gD(X=x)$ which is the probability (density) of a variable taking a certain value. 
Some assumptions involve the feature transformation of the data. By default, we denote $g$ as the feature encoder and $Z \triangleq g(X)$ as the induced random variable of the feature.

\begin{definition}[Joint Importance Weight]\label{def:jiw}
Joint importance weight $\gW$ is the ratio between joint distribution in the target domain $T$ and that in the source domain $S$, i.e., 
\begin{equation*}
\gW(x,y) \triangleq \frac{\gD_T(X=x,Y=y)}{\gD_S(X=x,Y=y)},~~~\forall x,y \in \gX \times \gY.
\end{equation*}
\end{definition}

\begin{definition}[Factorizable Joint Shift (FJS)]\label{def:fjs}
Two domains $\gD_S$ and $\gD_T$ satisfy the \emph{Factorizable Joint Shift} if their joint importance weight $\gW(x,y)$ can be factorized, i.e., $\gW(x,y) = \gU(x) \gV(y)$, where $\gU$ and $\gV$ are the data and the label importance factors, respectively.
\end{definition}

We first mathematically define Factorizable Joint Shift (FJS) in \Defref{def:fjs}. Note that to avoid the discussion on invalid importance weights, throughout this paper, we assume the source domain has a larger support than the target domain, i.e., $\gD_T(x,y) > 0 \Rightarrow \gD_S(x,y) > 0$. As illustrated in \Figref{fig:venn}, FJS is a \emph{relaxed} domain adaptation assumption and contains all four previously mentioned assumptions as subsets. In \Tabref{tab:factor}, we provide mathematical definitions of each assumption as well as its importance factorization. The factorizations for covariate shift and label shift are obvious by their definitions. Domain invariance exhibits the exact factorization as covariate shift since it is a subset of covariate shift. Yet, the factorization of generalized label shift requires more efforts. Due to the space limit, the proof is deferred to Appendix~\ref{appendix:proof}.

In the rest of the section, we delve deeper into FJS. We study the scope of FJS from two perspectives: 1) Are there any principles about when FJS is required? 2) What are the cases that can only be handled by FJS while beyond the scope of previous assumptions in \Tabref{tab:factor}?


\subsection{When FJS is not required: an Analysis}

When we study the scope of FJS, especially comparing it with GLS, which is a prior assumption most similar to FJS, we observe that the key difference between FJS and GLS is also the key difference between CS and DI. Specifically, FJS and CS generally assume there exists shift in the data marginal distributions, while GLS and DI further constrain such data distribution shift by assuming the shift can be eliminated via a proper feature encoder.
However, in the current stage of research on domain adaption, domain adversarial training techniques tailored for DI are widely adopted, resulting in the ambiguity on the boundary between DI and CS for practical problems. Thinking along this line, we observe an interesting yet common phenomenon that, for most of the learning tasks used in domain adaptation literature, DI and CS make no differences. We further identify the reason behind is that the chosen tasks all target on learning \emph{deterministic} functions. For example, consider the most widely used computer vision tasks such as digit or object classification. Given an image, the corresponding output is quite certain. As stated in \Thmref{thm:det_cs_di} and \Thmref{thm:det_fjs_gls}, we theoretically prove that such determinacy between data and label degenerates CS and FJS, reducing them to the same as DI and GLS. Proofs can be found in Appendix~\ref{appendix:proof}.

\begin{theorem}[Determinacy + Matched Label + CS $\Rightarrow$ DI]\label{thm:det_cs_di}
If there exists a deterministic mapping from data to label, and the two domains share the same label distribution, then the source and target domain satisfying covariate shift assumption implies they satisfy domain invariance.
\end{theorem}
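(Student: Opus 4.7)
The plan is to give a short constructive proof: exhibit an explicit feature encoder $g$ and verify the two conditions in the definition of domain invariance. The natural candidate is to take $g$ to be the deterministic label function itself, so that $Z = g(X) = Y$, which will collapse both conditions into trivial statements.

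First I would unpack the hypotheses. Determinacy means that there is a function $f : \gX \to \gY$ such that $\gD_S(Y \mid X = x)$ is the point mass at $f(x)$; by CS, $\gD_T(Y \mid X = x)$ is the same point mass, so the \emph{same} $f$ governs both domains. I would then set $g \triangleq f$ and $Z \triangleq g(X)$, so that $Z = Y$ almost surely in either domain. This reduction is really the only ``content'' of the argument.

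Next I would verify condition (i) of DI, namely $\gD(Y \mid g(X)) = \gD(Y \mid X)$. For each $x$, the right-hand side is the point mass at $f(x)$; the left-hand side, evaluated at the realization $g(X) = f(x)$, is also the point mass at $f(x)$ (since $Y = g(X)$ almost surely), so the two conditionals agree. Then I would verify condition (ii), $\gD_S(Z,Y) = \gD_T(Z,Y)$. Because $Z = Y$ almost surely, both joints are supported on the diagonal and agree with the marginals of $Y$ in the respective domains; the matched-label hypothesis $\gD_S(Y) = \gD_T(Y)$ then gives the equality.

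I do not anticipate any real obstacle: the construction $g = f$ is forced, and each verification is a one-line observation. The only point that needs a little care is explaining why determinacy plus CS forces a \emph{single} function $f$ to describe the label in both domains, rather than two potentially different deterministic maps; this is where the covariate shift hypothesis is actually used, and it is worth stating explicitly before choosing $g$.
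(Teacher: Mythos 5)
Your proposal is correct and follows essentially the same route as the paper's proof: take the deterministic labeling function $f$ itself as the feature encoder $g$, note that $Z = g(X) = Y$ almost surely so that $\gD(Y \mid g(X)) = \gD(Y \mid X)$, and use the matched-label hypothesis to get $\gD_S(Z,Y) = \gD_T(Z,Y)$ since both joints reduce to the label marginal on the diagonal. Your explicit remark that covariate shift is what forces a \emph{single} $f$ to govern both domains is a point the paper leaves implicit, but it does not change the argument.
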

\vspace{-1mm}

\begin{theorem}[Determinacy + FJS $\Rightarrow$ GLS]\label{thm:det_fjs_gls}
If there exists a deterministic mapping from data to label, then the source and target domain satisfying factorizable joint shift assumption implies they satisfy generalized label shift.
\end{theorem}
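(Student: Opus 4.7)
Under determinacy, the label $Y$ is a deterministic function of the covariate $X$, so there is some $f:\gX\to\gY$ with $\gD(Y\mid X=x)$ equal to the point mass at $f(x)$ in both domains. The plan is to collapse the feature side of GLS by choosing the encoder to be exactly this label function, $g:=f$. Then $Z \triangleq g(X) = Y$ almost surely, and both defining conditions of GLS turn into tautologies, yielding the implication without further work.

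\textbf{Main steps.} First, I would record the consequence of determinacy on the joint distribution: $\gD(x,y) = \gD(x)\,\mathbf{1}[y=f(x)]$ in both domains, so the joint support is $\{(x,f(x)) : x\in\gX\}$, and FJS on this support reads $\gD_T(x)/\gD_S(x) = \gU(x)\,\gV(f(x))$. Second, take $g:=f$ and verify the two GLS conditions from Table~\ref{tab:factor}: (i) label preservation, since $\gD(Y\mid g(X)) = \gD(Y\mid Y)$ is the point mass at $Y$ and $\gD(Y\mid X)$ is the point mass at $f(X)=Y$; and (ii) matched feature-conditional, since $\gD_S(Z\mid Y) = \gD_S(Y\mid Y) = \gD_T(Y\mid Y) = \gD_T(Z\mid Y)$. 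Both conditions hold, hence GLS follows.

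\textbf{Consistency with the factorization.} For narrative completeness I would also check that the GLS factorization of $\gW$ induced by $g=f$ reproduces the same joint importance as the FJS factorization. A one-line Bayes computation using determinacy gives $\gD(X{=}x\mid Z{=}f(x)) = \gD(x)/\gD(f(x))$, so the GLS factors in Table~\ref{tab:factor} multiply out to $\frac{\gD_T(x)\,\gD_S(f(x))}{\gD_S(x)\,\gD_T(f(x))}\cdot\frac{\gD_T(f(x))}{\gD_S(f(x))} = \gD_T(x)/\gD_S(x)$, matching $\gU(x)\,\gV(f(x))$ coming from FJS on the support.

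\textbf{Main obstacle.} The hard part here is conceptual, not technical: determinacy alone already suffices to exhibit a GLS-compatible encoder, namely $g=f$, so the FJS hypothesis is not even invoked in the verification. One is tempted to pick a ``less trivial'' encoder such as $g(x)=(f(x),\gU(x))$ so that the FJS structure appears explicitly in the proof, but then the matched feature-conditional condition $\gD_S(Z\mid Y)=\gD_T(Z\mid Y)$ generally fails, forcing $\gU$ to be almost surely constant on the fibers $f^{-1}(y)$, which is an extra assumption. Recognizing that the trivial choice is in fact the right one -- FJS enters only when one reasons about the shape of the factorization, not about the implication itself -- is the main step.
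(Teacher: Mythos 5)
Your proof is correct and takes essentially the same route as the paper's: both choose the encoder to be the deterministic labeling function $g = f$, so that $Z = Y$ almost surely, and then verify the two GLS conditions (label preservation $\gD(Y|g(X))=\gD(Y|X)$ and matched feature-conditionals $\gD_S(Z|Y)=\gD_T(Z|Y)$) as immediate consequences of determinacy. Your closing observation that the FJS hypothesis is never actually invoked applies equally to the paper's own proof, which likewise uses only determinacy.
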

\vspace{-1mm}

Our theoretical findings are meaningful in the two fronts: On the one hand, we prove that using domain invariance and generalized label shift assumptions are quite ``safe'' when the task of interest is deterministic; On the other hand, we question the validity of assuming the existence of domain invariant representations when dealing with tasks involving clear uncertainty, such as age estimation from images.

\subsection{When you need FJS: a Motivating Example}
\label{sec:toy-example}

\begin{figure}[!ht]
\begin{center}
\centerline{\includegraphics[width=1.0\columnwidth]{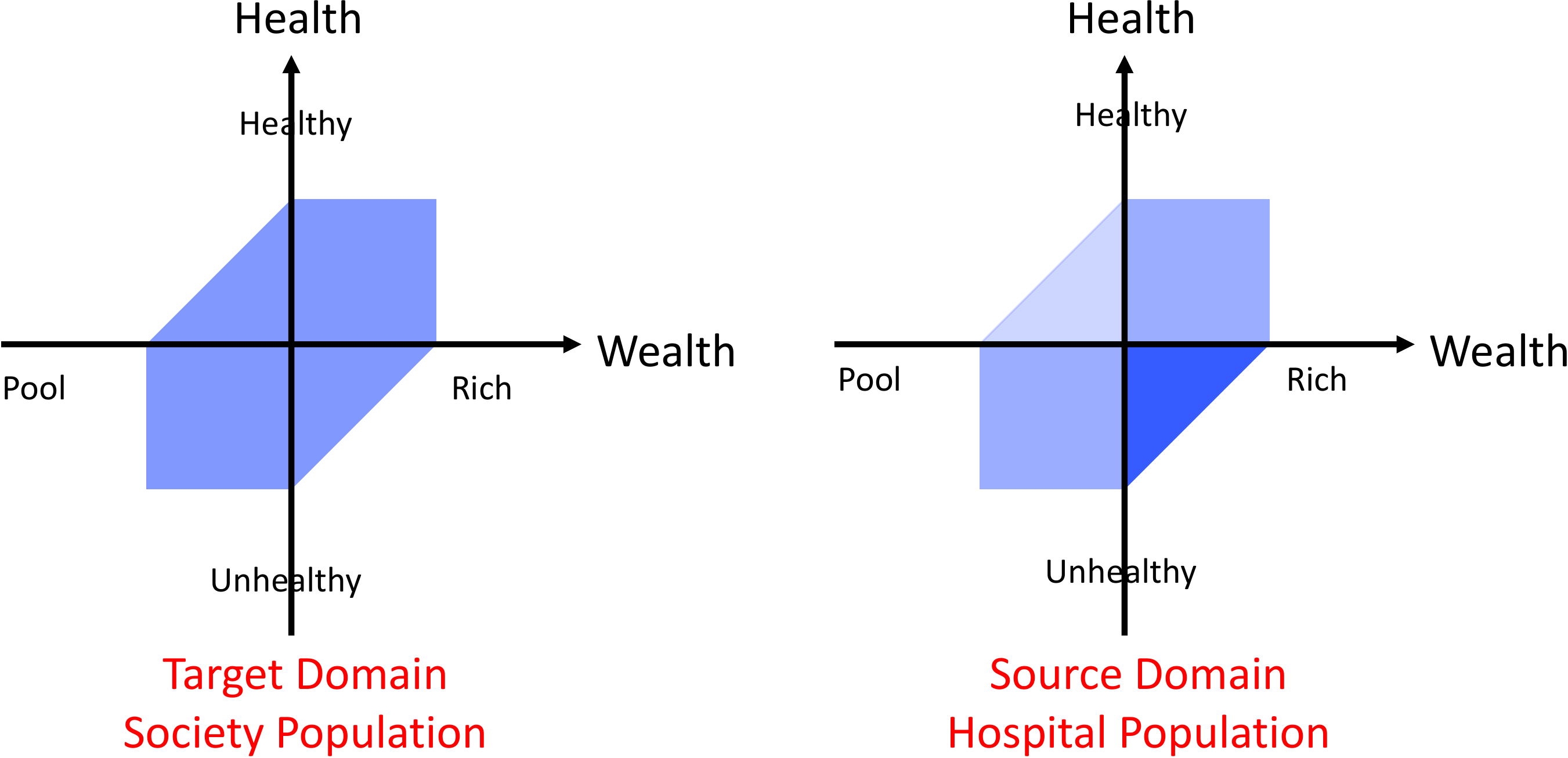}}
\vspace{-3mm}
\caption{A toy example that satisfies FJS. The plots show the joint distributions of wealth and health. A deeper color indicates a larger probability density. \textbf{Left}: the joint distribution on the whole society which assumed to be uniform in the hexagon. \textbf{Right}: the joint distribution observed in a hospital which exhibits biases.}
\label{fig:toy}
\end{center}
\end{figure}
\vspace{-8mm}

In this section, we construct a toy example which is within the assumption of FJS while outside of all assumptions in \Tabref{tab:factor}. The task is to predict one's health condition $y$ from her wealth $x$. With normalization, we abstract one's health and wealth as real values from $[-1,1]$. As shown in \Figref{fig:toy}, we assume our target domain is the whole society population with a uniformly distribution over a hexagon on the wealth-health 2D plane. Further, the source domain is the population that observed in a hospital. Note that the source domain distribution is no longer uniform due to the bias from both data and label. Intuitively, we create biases for two reasons: (1) Along the ``health'' axis, populations from hospitals intrinsically induce bias towards unhealthy course. (2) Along the ``wealth'' axis, sampling from the hospital population also causes an implicit shift on the wealth distribution, where a preference is imposed on the rich direction.
As a result, the source domain has a skewed distribution as shown in the right side of \Figref{fig:toy}. In this toy example, the joint importance has the following factorization, $\frac{\gD_T(x,y)}{\gD_S(x,y)} = \gU(x)\gV(y)$, where $\gU(x) = u_{\rm{rich}} \cdot \1[x \geq 0] + u_{\rm{poor}} \cdot\1[x<0]$ and $\gV(x) = v_{\rm{healthy}} \cdot \1[y \geq 0] + v_{\rm{unhealthy}} \cdot\1[y<0]$. It clearly satisfies FJS and does not satisfy CS and LS. In the Appendix, we prove this toy example is also beyond GLS.

\vspace{-2mm}
\section{Method: Joint Importance Aligning}
We propose Joint Importance Aligning (JIA), which performs deep domain adaptation with factorizable domain shift. JIA employs two deep model, $U(x;\theta_u)$ and $V(y;\theta_v)$, to learn the data and label importance factor simultaneously. Once the importance is learned, we can apply any existing domain adaptation algorithms with learned weights on the source domain data during training.

In this paper, we extend the well-known Domain Adversarial Neural Networks~(DANN)~\cite{DANN} with JIA. We refer the resulting pipeline to as Joint Importance Aligning Domain Adaptation~(JIADA). It has three players: encoder $E$, predictor $F$ and discriminator $D$. With learnt importance factors $U,V$, the three players perform the following optimization to learn to align the feature and make the prediction:
\vspace{-2mm}
{
\begin{align*} \label{eq:jiada}
\max_{E}\min_{D}~~~ &\E_{x,y\sim \gD_S} {\color{red}U(x)V(y)}\log(D(E(x))) \\
&~~~~~~~+\E_{x \sim \gD_T} \log(1-D(E(x))), \\
\min_{F,E}~~~ &\E_{x,y\sim \gD_S} {\color{red}U(x)V(y)} \gL_{\rm{pred}}(y,F(E(x))),
\end{align*}
}
\vspace{-1mm}
where $\gL_{\rm{pred}}$ is the prediction loss (e.g., cross-entropy for classification or $L_2$ for regression). As indicated by the red color, $U(x)$ and $V(y)$ are the newly added terms comparing to the original DANN.


To estimate the importance factor, we propose two objectives for the supervised and unsupervised DA settings:
\vspace{-2mm}
{\small
\begin{align*} 
\gL_{\rm{sup}}(U,V) = &\E_{x, y \sim \gD_S} \log(1 + U(x)V(y)) \\
&~~~~+\E_{x, y \sim \gD_T} \log(1 + \frac{1}{U(x)V(y)}),\\
\gL_{\rm{unsup}}(U,V) = &\E_{x \sim \gD_S}\log(1 + U(x)\widetilde{V}(x)) \\
&~~~~+\E_{x \sim \gD_T} \log(1 + \frac{1}{U(x)\widetilde{V}(x)}).
\end{align*}
}
\vspace{-5mm}

Note that in unsupervised objective, we define $\widetilde{V}(x) \triangleq \E_{y \sim \gD_S(y|x)} V(y)$ which is the label importance factor's effect on the data. 

\begin{proposition}\label{prop:sup}
The importance factors $U,V$ achieve the minima of the objective $\gL_{\mbox{sup}}$ if and only if the their product equals the true joint importance. Formally, $U,V \in \arg\min_{U',V'} \gL_{\mbox{sup}}(U',V')$ is equivalent to $U(x)V(y) \equiv \gU(x)\gV(y)$.
\end{proposition}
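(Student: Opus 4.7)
The plan is to recast $\gL_{\mathrm{sup}}$ as a functional of the product $W(x,y) \triangleq U(x)V(y)$ and reduce the optimization to a pointwise problem. Write
\begin{equation*}
\gL_{\mathrm{sup}}(U,V) = \int \Bigl[\gD_S(x,y)\log(1+W(x,y)) + \gD_T(x,y)\log\bigl(1 + \tfrac{1}{W(x,y)}\bigr)\Bigr]\, dx\, dy.
\end{equation*}
Because the integrand at each $(x,y)$ depends on $U,V$ only through the scalar value $w = W(x,y)$, I would first minimize the integrand pointwise over $w > 0$, and then argue that the pointwise minimizer is actually attainable within the product-factorized class $\{W : W = UV\}$ under the FJS assumption.

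For the pointwise step, fix $(x,y)$ with $p = \gD_S(x,y)$, $q = \gD_T(x,y)$, and let $f(w) = p\log(1+w) + q\log(1+1/w)$. A direct differentiation gives $f'(w) = (pw - q)/[w(w+1)]$, so the unique positive critical point is $w^* = q/p$, which is a minimum since $f$ is strictly convex on $(0,\infty)$ (one can verify $f''(w^*) > 0$, or simply compare signs of $f'$ on either side of $w^*$). At points where $q = 0$ the minimum pushes $w^* \to 0$, consistent with $\gW(x,y) = 0$ there; the support condition $\gD_T(x,y) > 0 \Rightarrow \gD_S(x,y) > 0$ from the paper ensures $w^* = q/p$ is well-defined everywhere we need it.

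Now I combine the two steps. The pointwise optimum $w^*(x,y) = \gD_T(x,y)/\gD_S(x,y)$ is, by \Defref{def:jiw}, exactly the true joint importance $\gW(x,y)$. Under FJS (\Defref{def:fjs}) we have $\gW(x,y) = \gU(x)\gV(y)$, so the pointwise optimum \emph{lies in} the product class and is therefore also the constrained optimum. For the ``if'' direction, if $U(x)V(y) \equiv \gU(x)\gV(y)$ then $W$ attains the pointwise minimum at every $(x,y)$ in the support, hence minimizes $\gL_{\mathrm{sup}}$. For the ``only if'' direction, if $(U,V)$ minimizes $\gL_{\mathrm{sup}}$ then, by the uniqueness of the pointwise minimizer of $f$, we must have $U(x)V(y) = \gW(x,y) = \gU(x)\gV(y)$ $\gD_S$-almost everywhere, which is the claimed equivalence (note the individual factors $U,V$ are only identified up to a multiplicative constant, but their product is uniquely pinned down).

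The main obstacle I anticipate is handling the ``only if'' direction rigorously on sets where either $\gD_T$ or $\gD_S$ vanishes: strictly speaking, the pointwise minimizer is unique only on the set where $p > 0$, and when $q = 0$ one needs to justify taking $w^* = 0$ as a limit (using the convention $0 \cdot \log\infty = 0$ in the target integral). The cleanest way around this is to restrict to the support of $\gD_S$, on which $\gW$ is well-defined, and to interpret equalities in the conclusion modulo $\gD_S$-null sets; this is the natural setting since only $\gD_S$-a.e.\ values of $U,V$ affect the training objective or the downstream reweighting in JIADA.
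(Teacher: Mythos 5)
Your proposal is correct and takes essentially the same route as the paper: the paper's Lemma~\ref{lem:opt} performs the identical pointwise minimization of $p\log(1+w)+q\log(1+1/w)$ at $w^*=q/p$, and its proof of Proposition~\ref{prop:sup} then invokes the FJS assumption to conclude that this unconstrained minimizer lies in the product class $\{UV\}$, exactly as you do. Your write-up is somewhat more careful than the paper's terse version---you make the ``only if'' direction, the uniqueness of the pointwise minimizer, and the $\gD_S$-almost-everywhere caveats explicit---but these are refinements of the same argument rather than a different approach.
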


\begin{proposition}\label{prop:unsup}The importance factors $U,V$ achieve the minima of the objective $\gL_{\mbox{unsup}}$ if and only if they matches the source data marginal distributions to the target domain. Formally, $U,V \in \arg\min_{U',V'} \gL_{\mbox{sup}}(U',V')$ is equivalent to, for any $x$, $\int_{y}\gD_S(x,y)U(x)V(y)dy = \gD_T(x)$.
\end{proposition}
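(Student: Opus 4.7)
The plan is to reduce the problem to a pointwise scalar optimization. Observe that $\gL_{\mbox{unsup}}$ depends on $(U,V)$ only through the composite scalar function $W(x) \triangleq U(x)\widetilde{V}(x)$, so the objective can be rewritten as
\[
\gL_{\mbox{unsup}}(U,V) = \int \gD_S(x)\log(1+W(x))\,dx + \int \gD_T(x)\log\!\left(1+\tfrac{1}{W(x)}\right)\,dx.
\]
Minimizing over all admissible $(U,V)$ therefore reduces to choosing, for each $x$, the single nonnegative scalar $w = W(x)$ that minimizes
\(
f_x(w) \triangleq \gD_S(x)\log(1+w) + \gD_T(x)\log(1+1/w).
\)

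For each $x$ on the support of both $\gD_S$ and $\gD_T$, I would differentiate and simplify to
\[
f_x'(w) = \frac{\gD_S(x)\,w - \gD_T(x)}{w(1+w)},
\]
which is strictly negative for $w < \gD_T(x)/\gD_S(x)$ and strictly positive for $w$ larger. Combined with $f_x(w)\to+\infty$ as $w\to 0^+$ or $w\to\infty$, this exhibits a unique global minimizer $w^\ast(x) = \gD_T(x)/\gD_S(x)$. The pointwise optimality condition is therefore $U(x)\widetilde{V}(x) = \gD_T(x)/\gD_S(x)$ for every $x$ in the support. Finally, unfolding $\widetilde{V}(x) = \int \gD_S(y\mid x)V(y)\,dy$ and multiplying both sides by $\gD_S(x)$ gives
\[
\int_y \gD_S(x,y)\,U(x)V(y)\,dy = \gD_T(x),
\]
which is precisely the characterization in the proposition. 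The reverse direction is immediate: any $(U,V)$ satisfying this identity yields $W(x) = w^\ast(x)$ and therefore attains the pointwise minimum of $f_x$ for (almost) every $x$, hence globally minimizes $\gL_{\mbox{unsup}}$.

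The main obstacle is a careful treatment of boundary effects. Wherever $\gD_T(x)=0$ but $\gD_S(x)>0$, the pointwise optimum degenerates to $w = 0$, which can be realized by many $(U,V)$ pairs (e.g.\ $U(x)=0$, or $V\equiv 0$ on the $\gD_S(\cdot \mid x)$--support), so the ``if and only if'' should be interpreted up to appropriate $\gD_S$-null sets. A secondary subtlety is that $f_x$ is \emph{not} globally convex on $(0,\infty)$, so convexity alone does not yield uniqueness of the minimizer; I instead rely on the strictly negative-then-positive sign pattern of $f_x'$, which gives a single critical point that must be the global minimum by the coercivity of $f_x$ at the endpoints.
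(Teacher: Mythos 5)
Your proof is correct and follows essentially the same route as the paper: the paper's Lemma~\ref{lem:opt} performs exactly your pointwise reduction, computes the same derivative $f'(w)=\frac{p}{w(1+w)}(w-\frac{q}{p})$, concludes $U(x)\widetilde{V}(x)=\gD_T(x)/\gD_S(x)$, and then unfolds $\widetilde{V}$ just as you do. If anything, you are slightly more careful than the paper, since you make explicit the coercivity/sign-pattern argument (rather than appealing to a single critical point) and flag the degenerate case $\gD_T(x)=0$, $\gD_S(x)>0$, which the paper's standing support assumption quietly absorbs.
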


In Proposition \ref{prop:sup} and \ref{prop:unsup}, we theoretically show the correctness of these objectives. However there is still one difficulty remains in the unsupervised setting. Matching the data distribution of source and target domains does not guarantee to find the true importance factorization. This is because learning could collapse to a trivial solution, $U(x)=\frac{\gD_T^X(x)}{\gD_S^X(x)}$ and $V(y) = 1$, which ignores the label importance factor. To avoid the data importance factor overfitting at each point, we force it to have limited number of different values. Specifically, we ask the data importance factor to divide the data space into $K$ sub-domains and assign a constant importance weight for each sub-domain. Mathematically, to achieve this, we parameterize the data importance factor by the product of a $K$-way classifier $C(x;\theta_c)$ and a score vector $s \in \R^{K}$, i.e., $U(x) = \sum_{k=1}^K s_k C_k(x)$ where $C$ outputs a softmax result and $C_k$ is the probability of data $x$ being in the $K^{\text{th}}$ sub-domain.


\begin{figure}[!t]
\begin{center}
\centerline{\includegraphics[width=1.0\columnwidth]{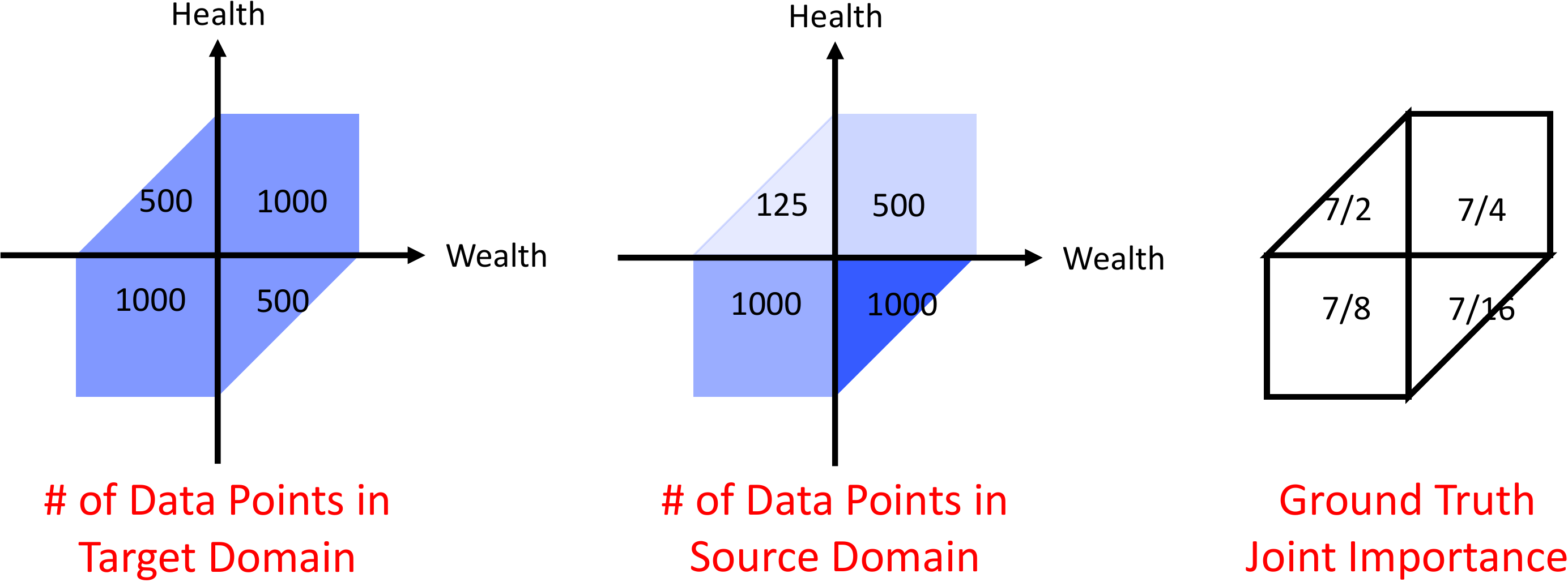}}
\vspace{-3mm}
\caption{Statistics of the synthetic toy dataset.}
\label{fig:dataset}
\end{center}
\vspace{-8mm}
\end{figure}

\begin{table}[!t]
\caption{Quantitative results for unsupervised domain adaptation on the toy dataset. Best performance is shown in bold.}
\label{tab:res-unsup}
\vspace{-2mm}
\begin{center}
\resizebox{0.48\textwidth}{!}{
\begin{tabular}{lcc}
\hline
Assumptions & Methods & NLL~($\downarrow$) \\
\hline\hline
 & Target Only & 0.600 $\pm$ 0.002 \\
\hline\hline
\small{No Shift} & Source Only & 0.765 $\pm$ 0.021 \\
\small{Covarite Shift} & SSBC~\cite{SSBC} & 0.771 $\pm$ 0.016 \\
\small{Label Shift} & BBSC~\cite{BBSC} & 0.773 $\pm$ 0.044 \\
\small{Domain Invariance} & DANN~\cite{DANN} & 0.772 $\pm$ 0.021 \\
\small{Generalized Label Shift} & IWDAN~\cite{GLS} & 0.772 $\pm$ 0.030 \\
\small{Factorizable Joint Importance} & JIADA (Ours) & \textbf{0.626 $\pm$ 0.011} \\
\hline
\end{tabular}
}
\end{center}
\vspace{-0.6cm}
\end{table}

\section{Experiments}
\label{sec:exp}

\noindent
\textbf{Dataset.} We construct the dataset following our toy example in \Secref{sec:toy-example}. As shown in \Figref{fig:dataset}, we sample 3000 data points uniformly from the hexagon and treat them as target domain data. For the source domain data, there are 125, 500, 1000, and 1000 points generated in the four quadrants, respectively. The joint importance is different in each quadrant while being factorizable.
\medskip\\
\textbf{Task and Evaluation Metric.} Our task is to predict one's health $y$ using her wealth $x$. For each input $x$, we ask the model to predict the mean $\mu(x)$ and variance $\sigma(x)$ of the label. To evaluate the model, we employ the target domain negative log-likelihood~(NLL) as the criterion (lower the NLL, better the model learns $\gD_T(Y|X)$):
{\small
\begin{equation}
\vspace{-0.2cm}
\mbox{NLL}(\mu,\sigma) = \E_{x,y\sim \gD_T} \left[ \frac{(y - \mu(x))^2}{2\sigma(x)^2} +\log(\sqrt{2\pi} \sigma(x)) \right].
\vspace{-0.15cm}
\end{equation}
}
\\
\textbf{Baselines.} We use the following baselines in the unsupervised DA setting: \emph{Sample Select Bias Correction~(SSBC)} \cite{SSBC}, \emph{Black Box Shift Correction (BBSC)} \cite{BBSC}, \emph{Domain Adversarial Neural Networks (DANN)} \cite{DANN}, and \emph{Importance Weighting Domain Adversarial Networks (IWDAN)} \cite{GLS}. As listed in \Tabref{tab:res-unsup}, each of them corresponds to one aforementioned DA assumption. We also include \emph{Source Only} and \emph{Target Only}, i.e., models that simply trained in the source or target domain. More details are in Appendix~\ref{app:baseline}.
\medskip

\begin{figure}[!t]
\begin{center}
\centerline{\includegraphics[width=0.9\columnwidth]{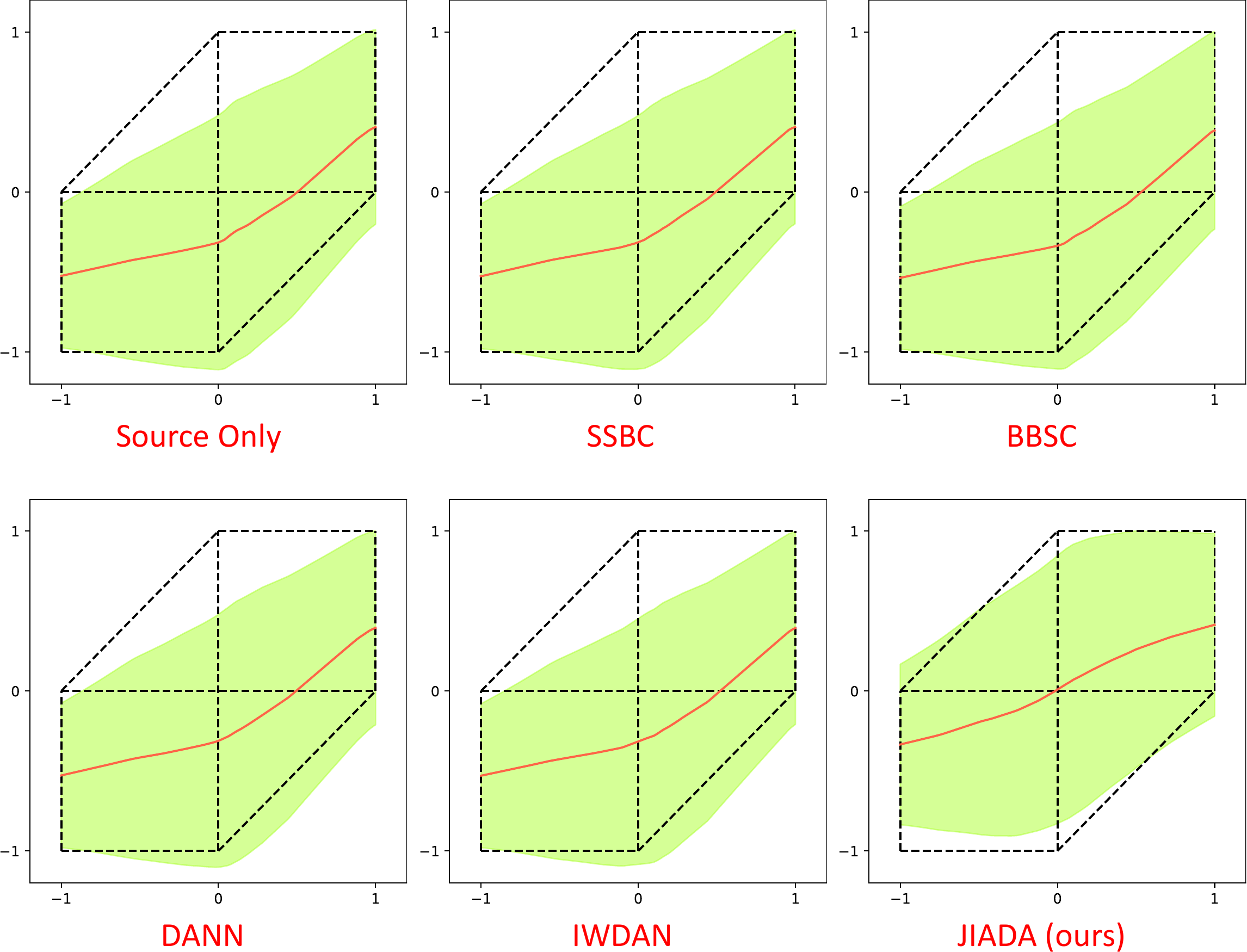}}
\caption{Qualitative results for unsupervised domain adaptation on the toy dataset. Red curves represent the predicted mean $\mu(x)$, green regions show the area deviated from the mean by $\pm \sqrt{3}\sigma(x)$.}
\label{fig:result}
\end{center}
\vspace{-0.5cm}
\end{figure}

\begin{figure}[!t]
\begin{center}
\centerline{\includegraphics[width=1.0\columnwidth]{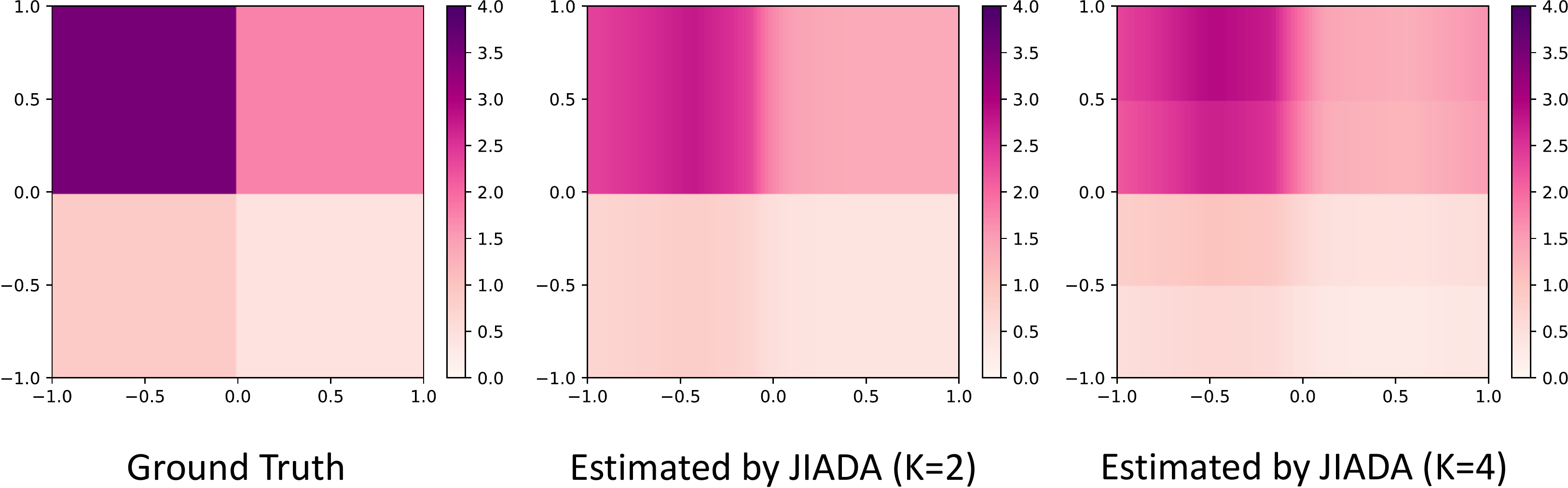}}
\caption{Visualization of the joint importance. We show ground truth as well as estimations by JIADA with different $K$.}
\label{fig:vis-iw}
\end{center}
\vspace{-1cm}
\end{figure}

\noindent \textbf{Quantitative Results.} We first show quantitative results in Table~\ref{tab:res-unsup}. As the table reports, compared to all baseline methods, JIADA significantly improves the results, achieving the best NLL. We also note that other methods do not show advantages when compared to \emph{Source Only} model. This sheds light on the limitations of current DA assumptions when uncertainty comes into the picture. Finally, the performance of JIADA is also comparable to \emph{Target Only} model, highlighting the effectiveness of the joint importance aligning.
\medskip\\
\textbf{Qualitative Results.} Furthermore, we show qualitative results in \Figref{fig:result}, where we plot the estimated mean and variance for each method. 
We note that a random variable uniformly distributed in $[a,b]$ has the first and second moments as $\frac{a+b}{2}$ and $\frac{(b-a)^2}{12}$. If we learn this variable via Gaussian parameterization, the optimal Gaussian parameters should be $\mu=\frac{a+b}{2}$ and $\sigma=\frac{b-a}{2\sqrt{3}}$. Thus comparing $[a,b]$ with $[\mu-\sqrt{3}\sigma,\mu+\sqrt{3}\sigma]$ reflects how well the Gaussian aligns with that variable.
The figure shows JIADA achieves the best estimates among all methods, where the predictions are mostly aligned with the ground truth.
\medskip\\
\textbf{Estimated Importance.} We visualize the joint importance and our method's estimation results in \Figref{fig:vis-iw}. We observe that JIADA faithfully infers the underlying joint importance with small errors, and is robust to different hyper-parameters. Interestingly, JIADA automatically identifies the group structure and treats $x\geq 0$ and $x<0$ differently.
\medskip\\
\textbf{Closing Remarks.} We have introduced a relaxed domain adaption assumption through a joint importance perspective. The proposed FJS and JIADA are more generic and practical, extending the scope of current DA methods. We believe the next step is to test it in larger datasets and real-world tasks across different applications.

\newpage


\bibliography{paper}
\bibliographystyle{icml2021}

\appendix

\section{Proofs}\label{appendix:proof}

\begin{proof}[Proof of Domain Invariance $\Rightarrow$ Covariate Shift]
By domain invariance, we have $\gD(Y|X=x) = \gD(Y|Z=g(x))$ holds in both domains. We also knonw the joint distribution of feature and label is the same in the two domian, i.e., $\gD_S(Y,Z) = \gD_T(Y,Z)$ which gives the equality of condition distribution
$\gD_S(Y|Z) = \gD_T(Y|Z)$.
As a result, $\gD_S(Y|X=x) = \gD_S(Y|Z=g(x)) = \gD_T(Y|Z=g(x)) = \gD_T(Y|X=x)$ which means the covariate shift holds.
\end{proof}
 
\begin{proof}[Proof of Importance Factorization of GLS]
Under the GLS assumption, the feature $Z=g(X)$ has properties: $\gD_S(Y|X)=\gD_S(Y|Z)$, $\gD_T(Y|X)=\gD_T(Y|Z)$ and $\gD_S(Z|Y)=\gD_T(Z|Y)$. Hence,
\begin{align*}
\frac{\gD_T(X,Y)}{\gD_S(X,Y)} = \frac{\gD_T(X) \gD_T(Y|X)}{\gD_S(X)\gD_S(Y|X)} = \frac{\gD_T(X) \gD_T(Y|Z)}{\gD_S(X) \gD_S(Y|Z)} \\
= \frac{\gD_T(X) \gD_T(Y,Z) /\gD_T(Z)  }{\gD_S(X) \gD_S(Y,Z) / \gD_S(Z)}
= \frac{\gD_T(X|Z) \gD_T(Y)}{\gD_S(X|Z) \gD_S(Y)}
\end{align*}
Note that in the last equality, we use $\frac{\gD(X=x)}{\gD(Z=g(x))} = \frac{\gD(X=x,Z=g(x))}{\gD(Z=g(x))} = \gD(X=x|Z=g(x))$.
\end{proof}

\begin{proof}[Proof of \Thmref{thm:det_cs_di}]
We construct a feature transform to demonstrate that DI assumption is satisfied. Let $f$ denote the deterministic mapping from data to label. Now consider $f$ as the feature transform. Apparently, function $f$ preserve all the label information, i.e. $\gD(Y=y|X=x) = \gD(Y=y|f(X)=f(x))$. Second, the joint distribution of feature and label is the same in the two domains since $\gD_S(Z=f(x),Y=y) = \gD_S(Y=y)\cdot \1[y=f(x)] = \gD_T(Y=y)\cdot \1[y=f(x)] = \gD_T(Z=f(x),Y=y)$. The equality holds since we have the condition that label distributions are matched, i.e., $\gD_S(Y=y)=\gD_T(Y=y)$.
\end{proof}

\begin{proof}[Proof of \Thmref{thm:det_fjs_gls}]
We construct a feature transform to demonstrate that GLS assumption is satisfied. Let $f$ denote the deterministic mapping from data to label. Now consider $f$ as the feature transform. Apparently, function $f$ preserve all the label information, i.e. $\gD_S(Y=y|X=x) = \gD_S(Y=y|f(X)=f(x))$. Second, the feature distribution conditioned on the label is the same in the two domains. $\gD_S(f(X)|Y) = \gD_S(Y|Y) = \gD_T(Y|Y) = \gD_T(f(X)|Y)$.
\end{proof}

\begin{proof}[Proof of Toy Example Not Satisfying GLS] Our toy example has a property that for any two data $x_1 \neq x_2$, $\gD(Y|X=x_1) \neq \gD(Y|X=x_2)$. Thus a feature transformation $g$ preserving all label information has to mapping $x_1,x_2$ to two different features $g(x_1)$ and $g(x_2)$. It means feature transformation is invertible, i.e. $x=g^{-1}(g(x))$. Thus $\gD_S(X|Y) \neq \gD_T(X|Y)$ implies $\gD_S(g(x)|Y) \neq \gD_T(g(x)|Y)$. Hence, it does not satisfy GLS. 
\end{proof}

\begin{lemma}\label{lem:opt}
Let $\sP$ and $\sQ$ are distributions over space $\gX$ which satisfy $\sP$ has a larger support than $\sQ$. Consider the following optimization, $w^*(x) = \arg\min_w \E_{x \sim \sP} \log(1+w(x)) +\E_{x \sim \sQ} \log(1 + \frac{1}{w(x)})$. We claim the optimal solution is the ratio between $\sQ$ and $\sP$, i.e, $w(x) = \frac{\sQ(x)}{\sP(x)}$. The optimal value equals $2(\log 2 - \mbox{JSD}(\sP||\sQ))$ where JSD is Jensen–Shannon divergence, $\mbox{JSD}(\sP||\sQ) = H(\frac{\sP+\sQ}{2}) - \frac{1}{2}(H(\sP)+H(\sQ))$.
\begin{proof}
Considering function $f(w) = p\log(1+w) + q\log(1+w^{-1})$, we have $f'(w) =  \frac{p}{w(1+w)}(w - \frac{q}{p})$. It indicates $f(w)$ reach's its optimal at $w=\frac{q}{p}$. Thus, the original optimization, $\int_{x} ( \sP(x) \log(1+w(x)) + \sQ(x) \log(1 + \frac{1}{w(x)} ) dx$ achieves optima at $w(x)=\frac{\sQ(x)}{\sP(x)}$.
\end{proof}
\end{lemma}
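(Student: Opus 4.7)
The plan is to reduce the functional optimization to a pointwise scalar minimization, identify the minimizer in closed form, and then substitute it back into the objective to recognize the Jensen--Shannon divergence expression.

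First I would write the objective as a single integral,
\[
J(w) \;=\; \int_{\gX} \Bigl[\sP(x)\log(1+w(x)) + \sQ(x)\log\bigl(1+w(x)^{-1}\bigr)\Bigr]\,dx,
\]
which is well defined on the support of $\sP$ (and $\sQ$ is absolutely continuous with respect to $\sP$ by the support hypothesis, so the $\sQ$-part is also integrable against any admissible $w>0$). Since the integrand at each $x$ depends only on the scalar value $w(x)$, it suffices to minimize pointwise the scalar function $f_{p,q}(w) = p\log(1+w) + q\log(1+1/w)$ for each pair $(p,q) = (\sP(x),\sQ(x))$ with $p>0$.

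Second I would compute $f_{p,q}'(w) = \frac{p}{1+w} - \frac{q}{w(1+w)} = \frac{pw-q}{w(1+w)}$, which vanishes uniquely at $w = q/p$ on $(0,\infty)$. The sign pattern of $f_{p,q}'$ (negative for $w<q/p$, positive for $w>q/p$) shows this critical point is a global minimum; equivalently one may check convexity via the second derivative. Applying this pointwise yields $w^*(x) = \sQ(x)/\sP(x)$ as the minimizer of $J$. (Points where $\sP(x) = \sQ(x) = 0$ contribute nothing and may be assigned any value.)

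Third I would substitute $w^*$ back and simplify. Using $1 + \sQ/\sP = (\sP+\sQ)/\sP$ and the analogous identity for $1+\sP/\sQ$, the integrand becomes
\[
\sP(x)\log\tfrac{\sP(x)+\sQ(x)}{\sP(x)} + \sQ(x)\log\tfrac{\sP(x)+\sQ(x)}{\sQ(x)}.
\]
Writing $M = (\sP+\sQ)/2$ and expanding, this equals $H(\sP) + H(\sQ) + 2\log 2 - 2H(M)$ after integrating and using $\int(\sP+\sQ)\log(\sP+\sQ)\,dx = 2\log 2 - 2H(M)$ with the differential entropy convention $H(\mu) = -\int \mu \log \mu$. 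Finally, since $\mathrm{JSD}(\sP\|\sQ) = H(M) - \tfrac12(H(\sP)+H(\sQ))$, the expression collapses to $2(\log 2 - \mathrm{JSD}(\sP\|\sQ))$, as claimed.

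The main potential obstacle is bookkeeping around the support condition and the behavior of the integrand as $w(x) \to 0$ or $\infty$: one must justify that the pointwise minimizer $w^*(x) = \sQ(x)/\sP(x)$ is integrable/measurable and that exchanging minimization with integration is legitimate. This is handled by noting that $f_{p,q}$ is convex on $(0,\infty)$ with a unique finite minimizer whenever $p>0$, so the pointwise rule defines an admissible $w^*$ and no boundary issues arise outside the (null) set where $\sP$ vanishes.
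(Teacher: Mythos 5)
Your identification of the minimizer follows essentially the same route as the paper's proof: reduce the functional problem to the pointwise scalar minimization of $f_{p,q}(w) = p\log(1+w) + q\log(1+1/w)$, compute $f_{p,q}'(w) = \frac{pw-q}{w(1+w)}$, and conclude $w^*(x) = \sQ(x)/\sP(x)$. You go beyond the paper in two useful ways. First, the paper's proof stops after identifying the critical point and never verifies the stated optimal value; your substitution step, using $1+\sQ/\sP = (\sP+\sQ)/\sP$ and $\int(\sP+\sQ)\log(\sP+\sQ)\,dx = 2\log 2 - 2H(M)$ with $M = (\sP+\sQ)/2$, correctly derives the claimed value $2(\log 2 - \mathrm{JSD}(\sP\|\sQ))$, so you actually prove the full lemma while the paper proves only half of it. Second, the paper merely asserts that $f$ ``reaches its optimal'' at $w=q/p$; your sign-pattern observation ($f' < 0$ for $w < q/p$, $f' > 0$ for $w > q/p$) is what genuinely establishes that this critical point is the global minimum.

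One correction: drop the convexity claims. $f_{p,q}$ is \emph{not} convex on $(0,\infty)$ in general --- for $q=0$ it reduces to the concave function $p\log(1+w)$, and even for $p=q$ one computes $f''(w) = \frac{1+2w-w^2}{w^2(1+w)^2}$, which changes sign at $w = 1+\sqrt{2}$. The function is unimodal but not convex, so both your parenthetical ``check convexity via the second derivative'' and the final paragraph's appeal to convexity are incorrect as written. Fortunately neither is needed: the sign of $f'$ already gives global minimality pointwise, and the exchange of minimization with integration requires only that the pointwise minimizer is attained and defines a measurable admissible function (it does, being a ratio of densities on the support of $\sP$), since pointwise minimization lower-bounds the integral and $w^*$ attains that bound. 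With the two convexity remarks replaced by the first-derivative argument you already gave, the proof is complete and correct.
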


\begin{proof}[Proof of \Proref{prop:sup}]
By the \Lemref{lem:opt}, we know $\gL_{\rm sup}$ achieves minima when $U,V$ satisfy $U(x)V(y)=\frac{\gD_T(x,y)}{\gD_S(x,y)}$. With the FJS assumption, we have $\frac{\gD_T(x,y)}{\gD_S(x,y)}=\gU(x)\gV(y)$.
\end{proof}

\begin{proof}[Proof of \Proref{prop:unsup}] By the \Lemref{lem:opt}, we know $\gL_{\rm unsup}$ achieves minima when $U,\widetilde{V}$ satisfy $U(x)\widetilde{V}(x)=\frac{\gD_T(x)}{\gD_S(x)}$which is equivalent to $U(x)\int_{y} \gD_S(y|x) V(y) dy=\frac{\gD_T(x)}{\gD_S(x)}$.
\end{proof}


\section{More Methodology Details}
\label{app:baseline}

In this section, we detail the baseline methods introduced in Section \ref{sec:exp}, where one representative method is introduced for each traditional domain adaptation assumption.
\begin{itemize}[leftmargin=*]
    \item \emph{Source-Only} method simply trains the model in the source domain and directly apply it to the target domain. The method assumes there exists no shift between source and target domains.
    \item \emph{Sample Select Bias Correction~(SSBC)}~\cite{SSBC} employs a domain classifier to predict the odds for a certain data point $x$ of being from two domains, i.e., $\gD_S(x)/\gD_T(x)$. It then uses the reciprocal odds to re-weight the source domain data during training. Intuitively, SSBC follows the covarite shift assumption.
    \item \emph{Black Box Shift Correction~(BBSC)}~\cite{BBSC} first trains a predictor in the source domain. Then it evaluates the predictor's confusion matrix as well as its predicted label distribution in the target domain. The label shift importance $\gD_T(y)/\gD_S(y)$ can be estimated based on the results. Finally, the estimated importance weights are leveraged to re-weight the training data from the source domain. We note that BBSC follows the label shift assumption.
    \item \emph{Domain Adversarial Neural Networks~(DANN)}~\cite{DANN} adopts adversarial training the to align the feature distributions between source and target domains. A predictor is trained based on such domain invariant features. The underlying assumption of DANN is the domain invariance.
    \item \emph{Importance Weighting Domain Adversarial Networks (IWDAN)} \cite{GLS} goes behind the Generalized Label Shift assumption, where importance weights are estimated across each category between source and target domains. The estimated weights are further combined with adversarial training, and finally formulate an importance weighted domain adversarial training objective.
\end{itemize}

%
%
%

\end{document}